\begin{document}

\title{Scale-Invariant Learning-to-Rank}


\author{Alessio Petrozziello}
\affiliation{%
  \institution{Expedia Group}
  \city{London}
  \country{United Kingdom}}

\author{Christian Sommeregger}
\affiliation{%
  \institution{Expedia Group}
  \city{London}
  \country{United Kingdom}}

\author{Ye-Sheen Lim}
\affiliation{%
  \institution{Expedia Group}
  \city{London}
  \country{United Kingdom}}

\renewcommand{\shortauthors}{Sommeregger and Petrozziello et al.}

\begin{abstract}
    At Expedia, learning-to-rank (LTR) models plays a key role on our website in sorting and presenting information more relevant to users, such as search filters, property rooms, amenities, and images. A major challenge in deploying these models is ensuring consistent feature scaling between training and production data, as discrepancies can lead to unreliable rankings when deployed. Normalization techniques like feature standardization and batch normalization could address these issues but are impractical in production due to latency impacts and the difficulty of distributed real-time inference. To address consistent feature scaling issue, we introduce a scale-invariant LTR framework which combines a deep and a wide neural network to mathematically guarantee scale-invariance in the model at both training and prediction time. We evaluate our framework in simulated real-world scenarios with injected feature scale issues by perturbing the test set at prediction time, and show that even with inconsistent train-test scaling, using framework achieves better performance than without.
\end{abstract}

\begin{CCSXML}
<ccs2012>
   <concept>
       <concept_id>10002951.10003317.10003347.10003350</concept_id>
       <concept_desc>Information systems~Recommender systems</concept_desc>
       <concept_significance>500</concept_significance>
       </concept>
 </ccs2012>
\end{CCSXML}

\ccsdesc[500]{Information systems~Recommender systems}
\keywords{machine learning, recommender systems}

\received{20 February 2007}
\received[revised]{12 March 2009}
\received[accepted]{5 June 2009}

\maketitle

\section{Background}

Many e-commerce companies such as Expedia employ A/B testing \cite{dixon2011b, cheng2016wide, haldar2019applying} to decide whether a new product or feature is good enough to be rolled out to all customers. In large organizations it is very likely that changes rolled out for testing in different teams would affect the data relied upon by a deployed learning-to-rank (LTR) model without the machine learning engineer even knowing about it \cite{sculley2015hidden, iqbal2019production}. For example, an A/B test might be performed for how the price of a room is displayed to the user: one group of users can be presented with nightly prices, while another group with the price of the full stay. An LTR model tracking room price as a feature will now see room prices at different scales, and may output differently ranked items for each group of users given the same query. Other examples could be a change in currency, a change of the guest rating range, or hotel-landmarks distances expressed in kilometers instead of miles. A recommender system should be scale-invariant to these features \cite{pennock2000social}.

One common solution to this issue is normalizing \cite{yuan2019scaling, ba2016layer} the training and test data, along with the data seen during real-time inference at production. However, real-time prediction time normalization in large-scale deployed applications are often infeasible due to increased latency \cite{jake2008user, bai2017understanding, arapakis2014impact}, distributed inference limits access to complete item information leading to unreliable normalization, and list-wise standardization is ineffective with truncated lists in real-world environments. 

\section{Scale-Invariant Learning-to-Rank}

Faced with this issue, we introduced a scale-invariant framework for our LTR models. In a LTR model, we generally have a scoring function $f(\mathbf{x}_{ij})$ that scores an item $j$ in a query $i$ to rank the item within the query. For scale-invariance, we propose the following scoring function:

\begin{equation}
    f(\mathbf{x}_{ij}) = f_d(\mathbf{x}^Q_{i},\mathbf{x}_{ij}^{I,F}) + f_w(\mathbf{x}^Q_{i},\mathbf{x}_{ij}^{I,S}) \\
\label{eq:fn}
\end{equation}

\noindent
where, $\mathbf{x_ij}$ are the features associated with item $j$ in list $i$, $\mathbf{x}^Q_{i}$ are query-wide features, while $\mathbf{x}_{ij}^{I,F}$ and $\mathbf{x}_{ij}^{I,S}$ are item features with and without scaling issues respectively. The scoring function above is essentially a neural network consisting of a deep path $f_d$ and a wide path $f_w$. $f_d$ is a feedforward neural network, and if $f_w$ is defined properly, it can be mathematically proven that any perturbation to $\mathbf{x}_{ij}^{I,S}$ can be canceled out when computing the difference in score for any two items $f(\mathbf{x}_{ij})-f(\mathbf{x}_{ik})$. Hence, the ranking of items is preserved for a list of items in a query. The provided supplementary materials can be referred to for the proofs. An example of $f_w$ we have implemented is with a cross-product as follows:

\begin{equation}
    f_w(\mathbf{x}^Q_{i},\mathbf{x}_{ij}^{I,S}) = <\mathbf{w}, (\mathbf{f}_s(\mathbf{x}^{Q}_{i}) \otimes_{kron} \log(\mathbf{x}^{I,S}_{ij}))>
\label{eq:fw}
\end{equation}

\noindent
where, $\mathbf{w}$ are weights that can be set to "pass-through". 

Since we only need the scoring function for scale-invariance, the overall framework is model-agnostic and can be integrated into the widely used industrial standard ListNet \cite{listnet} or ListMLE \cite{listmle}.

\section{Evaluation}

We evaluate the proposed scoring function on three LTR datasets. One is an internal dataset labelled as \emph{ExpediaHotels}, and the other two public datasets labelled as \emph{RecTour} and \emph{MSLR} respectively. \emph{ExpediaHotels} contains 56 million vacation property searches and bookings on Expedia Group's platform where the task is to rank the properties for a given search. The \emph{RecTour} dataset \cite{woznica2021present} is the same as our internal dataset but anonymised for public research. The \emph{MSLR} dataset \cite{qin2013introducing} is Microsoft's public Web30k dataset comprising around 30 thousand queries where the task is to rank web documents for a given search.

For each dataset, we evaluate the performance of ListNet and ListMLE, with and without our proposed scale-invariant (SIR), for unperturbed and perturbed cases. In the perturbed cases, we multiple a chosen feature in the test set with a scaling factor, causing scaling issues at prediction time. The feature and scaling factors are selected based on possible occurrence in practice.

For \emph{ExpediaHotels}, the property price feature uses the full price of the length of stay instead of nightly price, and change the price to the currency to the local currency of the property, instead of standardising to USD. This issues occur on a frequent basis as different A/B tests are performed.

For \emph{RecTour}, eventhough the dataset is  similar to \emph{ExpediaHotels}, the price feature is anonymised and we have to perturb another feature. The customer rating feature can often change based on different tests to present different rating ranges to users. In this dataset the customer rating feature is multiplied by a factor of 10 at prediction time.

In the case of the MSLR dataset, identifying features to perturb for scaling issues at prediction time proves to be a challenging task, as no features exhibit readily apparent scaling concerns. We propose to perturb the number of slash in the URL of a document in a query. Although it may sound like an arbitrary and irrelevant feature to select, it has been shown by \cite{han2018feature} that it is a highly relevant label in the dataset. URLs could also be subject to changes as metadata is injected into the link while a user browses a website, which could justify the possibility of the feature being scale variant at prediction time. Therefore, the feature tracking the number of slashes in a URL is multiplied by 100 here at prediction time. 

For the ExpediaHotels and the RecTour datasets, we randomly split the dataset into 70\% train and 30\% out-of-sample test sets for evaluation. For the MSLR dataset, 5 folds of train and test sets are provided. We evaluate across all 5 folds and average the performance. All trained models are equally tuned with classic hyper-parameter optimization. All models are trained on Databricks using Tensorflow 2.9 running on AWS g5.24xlarge clusters. Also, we use NDCG \cite{wang2013theoretical} as a measure of performance.

\begin{table}[t!]
\centering
\begin{tabular}{lll}
    \hline
    Model & Unperturbed & Perturbed \\ \hline
    ListNet & 0.526 & 0.518 \\
    ListNet (SIR) & 0.522 & 0.522 \\ \hline
    ListMLE & 0.668 & 0.639 \\
    ListMLE (SIR) & 0.667 & 0.667 \\
    \hline
\end{tabular}
\caption{The NDCG performance of ListNet and ListMLE, with and without SIR integration, on unperturbed and different perturbed test sets for the ExpediaHotels dataset.}
\label{table:ExpediaHotels}
\end{table}

\begin{table}[t!]
\centering
\begin{tabular}{lll}
    \hline
    Model & Unperturbed & Perturbed \\ \hline
    ListNet & 0.315 & 0.222 \\
    ListNet (SIR) & 0.339 & 0.324 \\ \hline
    ListMLE & 0.337 & 0.238 \\
    ListMLE (SIR) & 0.339 & 0.330 \\
    \hline
\end{tabular}
\caption{The NDCG performance of ListNet and ListMLE, with and without SIR integration, on unperturbed and different perturbed test sets for the RecTour dataset.}
\label{table:rectourdataset}
\end{table}

\begin{table}[t!]
\centering
\begin{tabular}{lll}
    \hline
    Model & Unperturbed & Perturbed \\ \hline
    ListNet & 0.681 & 0.603 \\
    ListNet (SIR) & 0.672 & 0.666 \\ \hline
    ListMLE & 0.690 & 0.655 \\
    ListMLE (SIR) & 0.701 & 0.692 \\
    \hline
\end{tabular}
\caption{The NDCG performance of ListNet and ListMLE, with and without SIR integration, on unperturbed and different perturbed test sets for the MSLR dataset.}
\label{table:mslrdataset}
\end{table}

In Tables \ref{table:ExpediaHotels}, \ref{table:rectourdataset} and \ref{table:mslrdataset}, first of all we can observe that for the unperturbed case, the performance between ListNet and ListMLE integrated with the proposed scale-invariance scoring function performs about the same. This passes our first test of do-no-harm to the existing models in which the proposed scoring function is integrated into. Then, we can observe that when the test set is perturbed, the models with SIR integrated in maintains its performance quite well when there are scaling issues in the test set that is not seen at training time.

\section{Conclusion}

We have proposed a scale-invariant LTR framework that can be integrated easily into any existing LTR approaches that accept an arbitrary scoring function. Through a realistic scenario in which we perturbed the test set at prediction time, we show that using our framework achieves better performance when there are scaling issues.

The current limitation of the proposed scale-invariant ranking function is that it only works for strictly positive numerical  features. A challenge remain in our future work includes extending this to work with perturbations that could be negative, as well as perturbations on categorical data. Another limitation is that the proposed ranking function does not detect scaling issue on its own. Therefore prior knowledge of which features would have scaling issues would be needed through domain knowledge and experience. If not known prior, all features would need to be assumed to have scaling issues and pass through the wide layer, potentially resulting in larger outer products that can be hard to optimize.

\bibliographystyle{ACM-Reference-Format}
\bibliography{ref}

\appendix

\section{Scale-Invariance Proof}

In learning-to-rank models, there is a ranking function that takes a list of items (represented by some vector of item-level features and query-level features) in a query, and scores each item to obtain an optimal sorted list of items. Before diving into the formal explanation of the proposed ranking function we establish the notations used throughout the paper in Table \ref{table_notation}.

\begin{table}[t!]
\centering
\begin{tabular}{lp{5cm}}
    $\mathbf{x}_{i}^Q \in \mathbb{R}^M$ & The query feature vector of query $i$. Items in the same query share the same query features. \\
    $\mathbf{x}_{ij}^{I} \in \mathbb{R}^{K_1+K_2}$ & The item feature vector of item $j$ in query $i$. It can be further split into $\mathbf{x}_{ij}^{I,F}$ and $\mathbf{x}_{ij}^{I,S}$. \\
    $\mathbf{x}_{ij}^{I,F} \in \mathbb{R}^{K_1}$ & The slice of item feature vector representing features whose scale remain the same in all cases. \\
    $\mathbf{x}_{ij}^{I,S} \in \mathbb{R}^{K_2}_{>0}$ & The partition of item features whose scale can vary at prediction time (to which our method is scale invariant). Note that the partition of features which have scaling issue is known. Here $\mathbb{R}_{>0}$ denotes positive real numbers. \\
    $\mathbf{x}_{ij} \in \mathbb{R}^{J}$ & The feature vector of the item $j$ in query $i$, where $J=M+K_1+K_2$. It is the concatenation of $\mathbf{x}_{i}^Q$ and item features $\mathbf{x}_{ij}^{I}$. \\
    $y_{ij}$ & Labeled score of item $j$ in query $i$ where $y_{ij}=1$ for the relevant item and $y_{ij}=0$ otherwise. \\
    $D_i$ & The number of items in a query $i$. \\
    $N$ & The total number of queries in the dataset.
\end{tabular}
\caption{Notations used in this paper.}
\label{table_notation}
\end{table}

The proposed SIR ranking function is defined as $f_{n}: \mathbb{R}^{M+K_1+K_2} \rightarrow \mathbb{R}$ is composed of two paths. One is the deep path defined as $f_d:\mathbb{R}^M \times \mathbb{R}^{K_1} \rightarrow \mathbb{R}$ which takes $\mathbf{x}_{i}^{Q}$ and $\mathbf{x}^{I,F}_{ij}$ as input and calculates the score of the items using a feed-forward neural network and a dense layer containing the scores. The other is the wide path defined as $f_w:\mathbb{R}^M \times \mathbb{R}^{K_1+K_2} \rightarrow \mathbb{R}$ which takes $\mathbf{x}_{i}^{Q}$ and $\mathbf{x}^{I,S}_{ij}$ as input, and linearly interacts them through their outer product. The output of both path are summed and passed to a soft-max function to convert the sumFmed outputs into probabilities, which can then be interpreted as scores. Note that the Siamese network form is for visualisation only. In practice, it is much more computationally efficient to represent the sub-networks as an additional dimension of a tensor.

More specifically, the score of item $j$ in query $i$ can be computed using the scoring function $f_n$ as follows:

\begin{equation}
    f_{n}(\mathbf{x}_{ij}) = f_d(\mathbf{x}^Q_{i},\mathbf{x}_{ij}^{I,F}) + f_w(\mathbf{x}^Q_{i},\mathbf{x}_{ij}^{I,S}) \\
\label{eq:fn1}
\end{equation}
\begin{equation}
    f_w(\mathbf{x}^Q_{i},\mathbf{x}_{ij}^{I}) = <\mathbf{w}, (\mathbf{f}_s(\mathbf{x}^{Q}_{i}) \otimes_{kron} \log(\mathbf{x}^{I,S}_{ij}))>
\label{eq:fw1}
\end{equation}

\noindent
where, $\mathbf{f}_s: \mathbb{R}^{M} \rightarrow \mathbb{R}^L$. Here $\mathbf{f}_{s}$ compresses the query features by projecting them to $\mathbb{R}^L$ with $L<M$, $\otimes_{kron}$ is the Kronecker product, $\mathbf{w}$ is a weight parameter to tune, and $<\cdot>$ represents the inner product operator. In the rest of this section, we proof that the ranking function $f_n$ theoretically guarantees scale-invariance in features $\mathbf{x}_{ij}^{I,S}$. 

Given two items $\mathbf{x}_{ij}$ and $\mathbf{x}_{ik}$ in the same query $i$, the score difference between $\mathbf{x}_{ij}$ and $\mathbf{x}_{ik}$ (and hence the ordering between the two items) is given by:

\begin{align}
\label{eq:proof0}
    f_n(&\mathbf{x}_{ij}) - f_n(\mathbf{x}_{ik}) \notag \\
    = & \left( f_d(\mathbf{x}^Q_{i},\mathbf{x}_{ij}^{I,F}) + f_w(\mathbf{x}^Q_{i},\mathbf{x}_{ij}^{I}) \right) \notag \\
    & - \left( f_d(\mathbf{x}^Q_{i},\mathbf{x}_{ik}^{I,F}) + f_w(\mathbf{x}^Q_{i},\mathbf{x}_{ik}^{I}) \right)
\end{align}

Now assume the scale of $\mathbf{x}_{i}^{I,S}$ increases by $c>0$ times, i.e. $\tilde{\mathbf{x}}_{ij}^{I,S} = c\mathbf{x}_{ij}^{I,S}$ and $\tilde{\mathbf{x}}_{ik}^{I,S} = c\mathbf{x}_{ik}^{I,S}$ where $c$ is a scaling factor which can take any positive value. Then the score difference of these two items after the scale change is given by:

\begin{align}
\label{eq:proof1}
    f_n(&\tilde{\mathbf{x}}_{ij}) - f_n(\tilde{\mathbf{x}}_{ik}) \notag \\
    = & \left( f_d(\mathbf{x}^Q_{i},\mathbf{x}_{ij}^{I,F}) + f_w(\mathbf{x}^Q_{i},\tilde{\mathbf{x}}_{ij}^{I}) \right) \notag \\
    & - \left( f_d(\mathbf{x}^Q_{i},\mathbf{x}_{ik}^{I,F}) + f_w(\mathbf{x}^Q_{i},\tilde{\mathbf{x}}_{ik}^{I}) \right)
\end{align}

\noindent
Now we would like to prove the following:

\begin{theorem}
The score difference between $\mathbf{x}_{ij}$ and $\mathbf{x}_{ik}$ does not change before and after scaling, i.e. $f_n(\tilde{\mathbf{x}}_{ij}) - f_n(\tilde{\mathbf{x}}_{ik})=f_n(\mathbf{x}_{ij}) - f_n(\mathbf{x}_{ik})$.
\end{theorem}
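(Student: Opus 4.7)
The plan is to exploit the fact that the logarithm transforms the multiplicative perturbation $\mathbf{x}_{ij}^{I,S} \mapsto c\,\mathbf{x}_{ij}^{I,S}$ into an additive shift, and that this shift, after being pushed through the Kronecker product and the inner product with $\mathbf{w}$, produces a correction term that depends only on the query $i$ and not on the item index $j$. Since the theorem is about the \emph{difference} of scores of two items in the \emph{same} query, any such query-only term cancels.

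Concretely, I would proceed as follows. First, note that $f_d$ does not take $\mathbf{x}^{I,S}$ as input, so the deep contributions in \eqref{eq:proof0} and \eqref{eq:proof1} are identical and can be discarded from the comparison. Second, observe that applying $\log$ componentwise gives
\begin{equation*}
\log(c\,\mathbf{x}_{ij}^{I,S}) \;=\; \log(c)\,\mathbf{1} \;+\; \log(\mathbf{x}_{ij}^{I,S}),
\end{equation*}
where $\mathbf{1}\in\mathbb{R}^{K_2}$ is the all-ones vector. Third, use bilinearity of the Kronecker product in its second argument to split
\begin{equation*}
\mathbf{f}_s(\mathbf{x}_i^Q)\otimes_{kron}\log(c\,\mathbf{x}_{ij}^{I,S}) \;=\; \log(c)\bigl(\mathbf{f}_s(\mathbf{x}_i^Q)\otimes_{kron}\mathbf{1}\bigr) + \mathbf{f}_s(\mathbf{x}_i^Q)\otimes_{kron}\log(\mathbf{x}_{ij}^{I,S}),
\end{equation*}
and then apply linearity of $\langle\mathbf{w},\,\cdot\,\rangle$ to obtain
\begin{equation*}
f_w(\mathbf{x}_i^Q,\tilde{\mathbf{x}}_{ij}^{I,S}) \;=\; f_w(\mathbf{x}_i^Q,\mathbf{x}_{ij}^{I,S}) + C_i,
\qquad
C_i := \log(c)\,\bigl\langle\mathbf{w},\,\mathbf{f}_s(\mathbf{x}_i^Q)\otimes_{kron}\mathbf{1}\bigr\rangle.
\end{equation*}

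The key observation, and the step I would flag most carefully, is that the additive correction $C_i$ depends only on the query features $\mathbf{x}_i^Q$ (through $\mathbf{f}_s$) and on the global scalar $\log c$, but is completely independent of $j$. Consequently $f_n(\tilde{\mathbf{x}}_{ij}) = f_n(\mathbf{x}_{ij}) + C_i$ for every item $j$ in query $i$, and substituting into \eqref{eq:proof1} yields
\begin{equation*}
f_n(\tilde{\mathbf{x}}_{ij}) - f_n(\tilde{\mathbf{x}}_{ik}) \;=\; \bigl(f_n(\mathbf{x}_{ij})+C_i\bigr) - \bigl(f_n(\mathbf{x}_{ik})+C_i\bigr) \;=\; f_n(\mathbf{x}_{ij}) - f_n(\mathbf{x}_{ik}),
\end{equation*}
which is exactly the claim. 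There is no real analytic obstacle; the whole argument rests on the log/Kronecker identity above. The only thing one must verify carefully is that the splitting indeed isolates an item-independent term, which requires the scaling factor $c$ to be the same positive scalar applied to every entry of $\mathbf{x}_{ij}^{I,S}$ (consistent with the hypothesis $c>0$ and with the assumption that $\mathbf{x}_{ij}^{I,S}\in\mathbb{R}^{K_2}_{>0}$ so that $\log$ is well defined).
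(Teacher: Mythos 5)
Your proposal is correct and follows essentially the same argument as the paper: split $\log(c\,\mathbf{x}_{ij}^{I,S})$ into $\log(c)$ plus $\log(\mathbf{x}_{ij}^{I,S})$, push the constant through the Kronecker product and inner product by bilinearity, and observe that the resulting item-independent term cancels in the score difference. Your explicit use of the all-ones vector $\mathbf{1}$ and the abstracted constant $C_i$ is in fact a slightly cleaner write-up of the same cancellation the paper performs term by term in its displayed chain of equalities.
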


\begin{proof}

\begin{align}
\label{eq:proof2}
    f_n(&\tilde{\mathbf{x}}_{ij}) - f_n(\tilde{\mathbf{x}}_{ik}) \notag \\
    = & \left( f_d(\mathbf{x}^Q_{i},\mathbf{x}_{ij}^{I,F}) + f_w(\mathbf{x}^Q_{i},\tilde{\mathbf{x}}_{ij}^{I,S}) \right) \notag \\
    & - \left( f_d(\mathbf{x}^Q_{i},\mathbf{x}_{ik}^{I,F}) + f_w(\mathbf{x}^Q_{i},\tilde{\mathbf{x}}_{ik}^{I,S}) \right) \notag \\
    = & f_d(\mathbf{x}^Q_{i},\mathbf{x}_{ij}^{I,F}) + <\mathbf{w},
    \left( \mathbf{f}_s(\mathbf{x}^{Q}_{i}) \otimes_{kron} \log(\tilde{\mathbf{x}}^{I,S}_{ij}) \right) > \notag \\
    & - f_d(\mathbf{x}^Q_{i},\mathbf{x}_{ik}^{I,F}) - <\mathbf{w}, \left( \mathbf{f}_s(\mathbf{x}^{Q}_{i}) \otimes_{kron} \log(\tilde{\mathbf{x}}^{I,S}_{ik}) \right)> \notag \\
    = & f_d(\mathbf{x}^Q_{i},\mathbf{x}_{ij}^{I,F}) + <\mathbf{w}, \left( \mathbf{f}_s(\mathbf{x}^{Q}_{i}) \otimes_{kron} \log(c\mathbf{x}^{I,S}_{ij}) \right) > \notag \\
    & - f_d(\mathbf{x}^Q_{i},\mathbf{x}_{ik}^{I,F}) - <\mathbf{w},\left (\mathbf{f}_s(\mathbf{x}^{Q}_{i}) \otimes_{kron} \log(c\mathbf{x}^{I,S}_{ik}) \right) > \notag \\
    = & f_d(\mathbf{x}^Q_{i},\mathbf{x}_{ij}^{I,F}) + <\mathbf{w}, \Big( \mathbf{f}_s(\mathbf{x}^{Q}_{i}) \otimes_{kron} \log(\mathbf{x}^{I,S}_{ij}) \Big)> \notag \\ 
    & + <\mathbf{w}, \Big( \mathbf{f}_s(\mathbf{x}^{Q}_{i}) \otimes_{kron} \log(c) \Big) > \notag \\
    & - f_d(\mathbf{x}^Q_{i},\mathbf{x}_{ik}^{I,F}) - <\mathbf{w}, \Big( \mathbf{f}_s(\mathbf{x}^{Q}_{i}) \otimes_{kron} \log(\mathbf{x}^{I,S}_{ik})\Big )> \notag \\
    & - <\mathbf{w}, \Big( \mathbf{f}_s(\mathbf{x}^{Q}_{i}) \otimes_{kron} \log(c) \Big) > \notag \\
    = & f_d(\mathbf{x}^Q_{i},\mathbf{x}_{ij}^{I,F}) + <\mathbf{w}, (\mathbf{f}_s(\mathbf{x}^{Q}_{i}) \otimes_{kron} \log(\mathbf{x}^{I,S}_{ij}))> \notag \\ 
    & - f_d(\mathbf{x}^Q_{i},\mathbf{x}_{ik}^{I,F})- <\mathbf{w}, (\mathbf{f}_s(\mathbf{x}^{Q}_{i}) \otimes_{kron} \log(\mathbf{x}^{I,S}_{ik}))> \notag \\
    = & f_n(\mathbf{x}_{ij}) - f_n(\mathbf{x}_{ik}).
\end{align}

\end{proof}

Therefore, the proposed ranking function gives the same score difference for two items in the same search with or without scaling. The difference of scores is invariant to the scale change. Also, observe that the scale-invariance property is independent of the loss function of the network, meaning we can apply any state-of-the-art loss when training the network.

Accordingly, the ranking of a list of items based on this ranking function is scale invariant as well. In query $i$, assume we have scores $f_n(\mathbf{x}_{i1})$, $f_n(\mathbf{x}_{i2})$, $\dots$, $f_n(\mathbf{x}_{iD_i})$. Then a score function for query $i$, denoted $\pi_i$, is a bijective function from  $\{1, 2, \dots, D_i\}$ to $\{1, 2, \dots , D_i\}$:

\begin{equation}
\pi_i:\{1, 2, \dots, D_i\} \rightarrowtail\!\!\!\!\!\rightarrow \{1, 2, \dots , D_i\}
\end{equation}

\noindent
where, $\rightarrowtail\!\!\!\!\!\rightarrow$ indicates bijection. Here the items are ranked according to scores in descending order, i.e. our ranking function $\pi$ satisfies $f_n(\mathbf{x}_{i\pi_i^{-1}(1)}) >= f_n(\mathbf{x}_{i \pi_i^{-1}(2)})>=\ldots>=f_n(\mathbf{x}_{i\pi_i^{-1}(D_i)})$. The notation $\pi_i^{-1}(j)$ indicates the index of the item ranked in the $j$-th place in the query by the ranking function $\pi_i$.

\end{document}